\def\Pr{\mathrm{P}}
\def\R{\mathbb{R}}
\def\B{\mathcal{B}}
\newcommand{\norm}[1]{\left\|#1\right\|}
\newcommand{\abs}[1]{\left|#1\right|}
\def\Pr{\mathrm{P}}
\def\Exp{\mathbb{E}}
\theoremstyle{plain}
\newtheorem{theorem}{Theorem}[section]
\newtheorem{proposition}[theorem]{Proposition}
\newtheorem{corollary}[theorem]{Corollary}
\theoremstyle{definition}
\newtheorem{definition}[theorem]{Definition}
\newtheorem{example}[theorem]{Example}
\theoremstyle{remark}
\newtheorem{remark}[theorem]{Remark}
\newcommand{\Pbb}{\mathbb{P}}
\newcommand{\LCI}{\operatorname{LowConfInt}}
\newcommand{\UCI}{\operatorname{UppConfInt}}
\title{Treatment of Statistical Estimation Problems in Randomized Smoothing for Adversarial Robustness}
\author{%
  Václav Voráček \\
  T\"ubingen AI center, University of T\"ubingen\\
  \texttt{vaclav.voracek@uni-tuebingen.de} \\
}
\begin{document}

\maketitle

\begin{abstract}
Randomized smoothing is a popular certified defense against adversarial attacks. In its essence, we need to solve a problem of  statistical estimation which is usually very time-consuming since we need to perform numerous (usually $10^5$) forward passes of the classifier for every point to be certified. In this paper, we review the statistical estimation problems for randomized smoothing to find out if the computational burden is necessary.  In particular, we consider the (standard) task of adversarial robustness where we need to decide if a point is robust at a certain radius or not using as few samples as possible while maintaining statistical guarantees.  We present estimation procedures employing confidence sequences enjoying the same statistical guarantees as the standard methods, with the optimal sample complexities for the estimation task and empirically demonstrate their good performance. Additionally, we provide a randomized version of Clopper-Pearson confidence intervals resulting in strictly stronger certificates.
The code can be found at~\url{https://github.com/vvoracek/RS_conf_seq}.
\newline
{\begin{center}
 \color{red} We encourage the readers only interested in statistics to start at Subsection~\ref{sec:ci}.
 \end{center}
 }
\end{abstract}

\section{Introduction}

\paragraph{Adversarial robustness:} It is well known that a tiny, adversarial, perturbation of the input can change the output of basically any undefended machine learning model \citep{biggio2013evasion, szegedy2014intriguing}; this is unpleasant and we continue in the mitigation of the problem. There are two main lines of work tackling this problem: (1) Empirical: the standard approach here is to use adversarial training~\citep{madry2018towards,goodfellow2015explaining} where the model is trained on adversarial examples. This approach does not provide guarantees, only empirical evidence suggesting that the model may be robust. With stronger attacks, we might (yet again) realize it is not the case. (2) Certified: with formal robustness guarantees for the model. We will focus on this, and in particular on \emph{randomized smoothing}~\citep{lecuyer2019certified} which is currently the strongest certification method\footnote{see leaderboard \url{https://sokcertifiedrobustness.github.io/leaderboard/}}. We will not cover other certification methods and we refer the reader to the  survey~\citet{li2023sok} instead.  We consider the standard task of certified robustness; the goal is to decide if the decision of a classifier $F$ at a particular input $x$ is robust against additive perturbations $\delta$ such that $\norm{\delta} \leq r$ for some norm $\norm{\cdot}$.  Formally, we ask if $F(x) = F(x')$ whenever $\norm{x-x'} \leq r$.

\paragraph{Randomized smoothing} is a framework providing state of the art formal guarantees on the adversarial robustness for many datasets. One of its benefits lies in the fact that there are no assumptions on the model, making it possible to readily transfer the methods from defending image classifiers against sparse pixel changes to different modalities; e.g., defending large language models against change of some letters/words/tokens. Randomized smoothing transforms any undefended classifier $F: \R^d \rightarrow \mathcal{Y}$ by a smoothing distribution $\varphi$ into a smoothed classifier 
$H_\varphi(x) = \arg\max_{y\in \mathcal{Y}} \Pbb_{\delta \sim \varphi} \llbracket F(x+\delta) = y\rrbracket $
for which robustness guarantees exist. We postpone the details for~later. During the certification process, we need to estimate the maximum probability of a multinomial distribution from samples as the exact computation is intractable. This statistical estimation problem is the focus of this paper.

\paragraph{Speed issues:} The main weakness of randomized smoothing is the extensive time required for both prediction and certification, making it troublesome for real-world applications. There is an inherent trade-off between the allowed probability of incorrectly claiming robustness\footnote{of an input for a model at a certain radius}  (type-1 error, $\alpha$), the probability of incorrectly claiming non-robustness (type-2 error, $\beta$), and the number of samples used $n$. The standard practice is to set $\alpha = 0.001$, $n = 100~000$ and the value of $\beta$ is then implicit. 
it might not be the most practically relevant setting since the implicitly set value of $\beta$ is exponentially small in $n$ when the sample is not close to the threshold.
The claim is made precise in Example~\ref{example:beta}.

The arguably more relevant setting is to set the values of $\alpha$ and $\beta$ and leave $n$ implicit. This  is much more challenging since it is no longer possible to draw the predetermined number of samples and use a favourite concentration inequality. We propose a new certification procedure using confidence sequences to adaptively (and optimally) deciding how many samples to draw addressing the problem.


\paragraph{Contributions:}
\begin{itemize}
    \item We introduce a new, strictly better version of Clopper-Pearson confidence intervals for estimating the class probabilities in Subsection~\ref{sec:ci}. The presented interval is optimal, and thus is the ultimate solution to the canonical statistical estimation of randomized smoothing.

    \item We propose new methods for the certification utilizing confidence sequences (instead of confidence intervals) in Subsection~\ref{sec:cs}. This allows us to draw \emph{just enough} samples to certify robustness of a point; greatly decreasing the number of samples needed.

    \item We provide a complete theoretical analysis of the proposed certification procedures. In particular, we provide matching (up to a constant factor) lower-bounds and upper-bounds for the width of the respective confidence intervals. We invert the bound and show that the certification procedure has the optimal sample-complexity in an adaptive estimation task.

    \item We provide empirical validation of the proposed methods confirming the theory.
\end{itemize}

\paragraph{Notation:} Bernoulli random variable with mean $p$ is denoted as $\mathcal{B}(p)$ and binomial random variable is $\mathcal{B}(n,p)$. Random variables are in capitals ($X$) and the realizations are lowercase ($x$). We type sequences in bold and denote $\mathbf{x}_{:t}$ first $t$ elements of $\mathbf{x}$. 
We write $a \lesssim b$ if there exists a universal constant $C>0$  such that $a\leq Cb$. If $a \lesssim b$ and $b \lesssim a$, then we write $a \asymp b$. Iverson bracket $\llbracket \Phi \rrbracket$ evaluates to $1$ if $\Phi$ is true and to $0$ otherwise.

\subsection{Paper organization}
First, in Section~\ref{sec:rs} we introduce randomized smoothing, then, in Subsection~\ref{sec:ci}, we introduce Clopper-Pearson confidence intervals, show that they are conservative and propose their improved (optimal) randomized version. In Subsection~\ref{sec:cs} we discuss shortcomings of confidence intervals and introduce confidence sequences and provide lower and upper bounds for their performance. We use the confidence sequences in Section~\ref{sec:exps} and benchmark them on a sequential estimation task.

\section{Randomized Smoothing}\label{sec:rs}

As outlined in Introduction, consider a classifier $F : \R^d \rightarrow \mathcal{Y}$ and let the class probabilities under additive noise $\varphi$ be $h_\varphi(x)_y =  \Pbb_{\delta \sim \varphi} [F(x+\delta) = y]$. Denote the highest probability (breaking ties arbitrarily) class in the original point $A = \arg\max_{y\in\mathcal{Y}} h_\varphi(x)_y$ and the second-highest probability class $B = \arg\max_{y\in\mathcal{Y} \setminus A} h_\varphi(x)_y$. Let the corresponding probabilities be $p_A$ and $p_B$ respectively. Recalling that  $H_\varphi(x) = \arg\max_{y \in \mathcal{Y}} h_\varphi(x)_y$, then for a certain function $r: [0,1]^2 \rightarrow \R_+$ we have
\[
\norm{x-x'} \leq r(p_A, p_B) \implies H_\varphi(x) = H_\varphi(x').
\]

This $r$ depends on the smoothing distribution $\varphi$ and the considered norm. For example, if the considered norm is $\ell_2$ and $\varphi$ is isotropic Gaussian with standard deviation $\sigma$, then $r(p_A, p_B) = \frac{\sigma}{2}(\Phi^{-1}(p_A) - \Phi^{-1}(p_B))$ where $\Phi^{-1}$ is Gaussian quantile function~\cite{cohen2019certified}. Note that in general, $r(\cdot, \cdot)$ is increasing in the first coordinate and decreasing in the second one. The intuition is that the larger the value of $p_A$ at $x$, the larger it will be also in the neighborhood of $x$; similarly for $p_B$. It is common in the literature to use the bound $p_B \leq 1-p_A$ and thus certify $r(p_A, 1-p_A)$. We stick to the convention in the paper and discuss the topic in more details in Appendix~\ref{sec:bin_mult}.

\paragraph{Statistical estimation:} The crux of the paper lies in the statistical estimation problems for randomized smoothing. We consider the abstract framework for randomized smoothing, so the proposed techniques can be used as a drop-in replacement in all randomized smoothing works with a statistical-estimation component (i.e., not in the de-randomized ones such as~\citet{levine2021improved}). We do not only propose methods that work good empirically, we also provide theoretical analysis suggesting that we solve the problems optimally in a certain strong sense. The main focus is on the following two constructs.
\begin{enumerate}
    \item Confidence intervals: A standard component of randomized smoothing pipelines is the Clopper-Pearsons confidence interval. It is known to be conservative\footnote{See \url{https://en.wikipedia.org/wiki/Binomial_proportion_confidence_interval}.}; thus, the certification procedures are unnecessarily  underestimating the certified robustness. We provide the \emph{optimal} confidence interval for binomial random variables, resolving this issue completely.
    \item Confidence sequences: In the standard randomized smoothing practice, we draw a certain, predetermined, number of samples and then we compute the certified radius on a confidence level $1-\alpha$. We improve on this by allowing for adaptive estimation procedures employing confidence sequences; We demonstrate the performance in the standard task of adversarial robustness, where we want to decide if a point is robust at radius $r$ with type-1 (resp. 2) error rates $\alpha$ (resp. $\beta$) using as few samples as possible. 
    
\end{enumerate}

\paragraph{Literature review of randomized smoothing:} 
The most relevant related works are~\citet{horvath2021boosting, chen2022input} and they are discussed in Section~\ref{sec:exps}. Here we briefly summarize literature relevant to randomized smoothing in general. The choice of the smoothing distribution $\varphi$ is a crucial decision determined mainly by the threat model with respect to which we want to be robust. For example, if we are after certifying $\ell_1$ robustness, we choose a uniform distribution in a $d-$dimensional $\ell_\infty$ ball~\citep{lee2019tight, yang2020randomized}, or better, splitting noise~\citep{levine2021improved}, but we do not go into details here. Alternatively, for $p-$norms, $p\geq 2$ one would usually use a $d-$dimensional normal distribution~\cite{lecuyer2019certified, cohen2019certified}. The variance of the distribution based on how large perturbations do we allow in our threat model. We refer the reader to~\cite{yang2020randomized} for a broader discussion on the smoothing distributions. It is possible to use methods in the spirit of randomized smoothing to certify other threat models, such as patch attacks~\citet{levine2020randomized} and sparse attack~\citet{bojchevski2020efficient}, which can be readily extended to other modalities. Sometimes, the "smoothing" distribution can be made supported on a small, discrete set and then we can evaluate the expectation exactly, yielding deterministic certification (often called de-randomized smoothing \citep{levine2021improved}). See also \cite{kumari2023trust} for a survey on randomized smoothing containing examples of when the certification is beyond additive $\ell_p$-norm threat model; even such techniques use the Bernoulli estimation subroutine.

\subsection{Confidence Intervals}\label{sec:ci}

We do not have access to the class $A$ probability $p_A$ and only have to estimate it from binomial samples; hence, the name \emph{randomized} smoothing. Because of the randomness, we can only provide probabilistic statements about the robustness of a classifier in the following spirit "with probability at least $1-\alpha$, robust radius is at least $r$" for a small $\alpha$, usually $0.001$. This failure probability corresponds to the event of overestimating $p_A$ and we control it with the help of confidence intervals. 

The standard choice for calculating the upper confidence interval is the Clopper-Pearson interval, sometimes called \emph{exact}. Regardless of this pseudonym, it is in reality conservative. In this subsection, we  introduce the Clopper-Pearson confidence interval for the mean of binomial random variables, demonstrate its limitations and introduce its (better) randomized version. 

\begin{definition}[Confidence interval for binomials]
     Let $u,v$ map sample to a real number. They form a (possibly randomized) confidence interval $I(x) = [u(x), v(x)]$ with coverage $1-\alpha$ if for any $p \in [0,1]$ it holds that
    \[
        \Pbb_{X \sim \B(n,p), I}\left( p \in I(X) \right) \geq 1-\alpha.
    \]
\end{definition}

We will mainly use one-sided confidence intervals; that is, $u(\cdot) = 0$ (lower confidence interval) or $v(\cdot) = 1$ (upper confidence interval). 
When we will talk about probability of confidence interval containing the parameter, it will be in the frequentist sense, keeping in mind that confidence intervals provide no guarantees post-hoc for any individual estimation.

Clearly, if $I(x) = [0,1]$ regardless of $x$, it will be a valid confidence interval but rather useless; thus we aim for short intervals. Ideally it would hold for every $p$ that $\Pbb_X(p \not \in I(X)) = \alpha$, otherwise, some values are included in the confidence intervals unnecessarily often they can be shortened. In the following we introduce the standard Clopper-Pearson confidence intervals~\cite{CP34}.

\begin{definition}[Clopper-Pearson intervals]
    One sided  upper interval is defined as $v(x) = 1$ and 
    \[
    u(x) = \inf\{p\,|\, \Pbb(\B(n,p) \geq x) > \alpha \}.
    \]
    The lower one is defined as $u(x) = 0$ and
    \[
    v(x) = \sup\{p\, |\, \Pbb(\B(n,p) \leq x) > \alpha \}.
    \]
\end{definition}

Amongst the deterministic confidence intervals, they are the shortest possible; however, they are in general conservative. In the binomial case $\B(n,p)$, there are only $n+1$ possible outcomes; and thus only $n+1$ possible confidence intervals 
suggesting that the actual coverage can be $1-\alpha$ only for at most $n+1$ values of $p$. The problem strikingly arises for upper confidence interval for large values of $p$. When we sample from $\B(n,p)$, regardless of the outcome, all values larger than $\sqrt[n]{\alpha}$ are contained in the confidence interval. This is a usual problem in the context of randomized smoothing, leading to sharp drops towards the end of robustness curves. We demonstrate this sub-optimality in the first part of Example~\ref{example:coverages}.
We mitigate this problem by introducing randomness into the confidence intervals. They will still have the desired coverage level $1-\alpha$, but will be shorter. Intuitively, we do so by ``interpolating" between the deterministic confidence intervals in the spirit of~\cite{stevens1950fiducial}.

\begin{definition}[Randomized Clopper-Pearson intervals]
    Let $W$ be uniform on the interval $[0,1]$.
    The randomized one sided  upper interval is defined as $v_r(x) = 1$ $u_r(x) = u'_r(x,W)$ where 
    \[
    u'_r(x, w) = \inf\{p\,|\, \Pbb(\B(n,p) > x)  + w\Pbb(\B(n,p) = x) > \alpha \}.
    \]
    The lower one is defined as $u_r(x) = 0$ and $v_r(x) = v'_r(x,W)$ where
    \[
    v'_r(x, w) = \sup\{p\,|\, \Pbb(\B(n,p) < x)  + w\Pbb(\B(n,p) = x) > \alpha \}.
    \]
\end{definition}

\begin{proposition}\label{propo:exact_coverage}
    Randomized Clopper-Pearson interval ($I_{\textrm{rCP}}$) have coverage exactly $1-\alpha$. Furthermore, for any confidence interval $I$ at level $1-\alpha$, and any $p \geq q \in [0,1]$ it holds that 
    \[\Pbb_{X \sim \mathcal{B}(n,p)}(q \in I(X)) \geq \Pbb_{X \sim \mathcal{B}(n,p)}(q \in I_{\textrm{rCP}}(X)).
    \]
\end{proposition}

\begin{figure}[t]
\centering
\includegraphics[width=\linewidth]{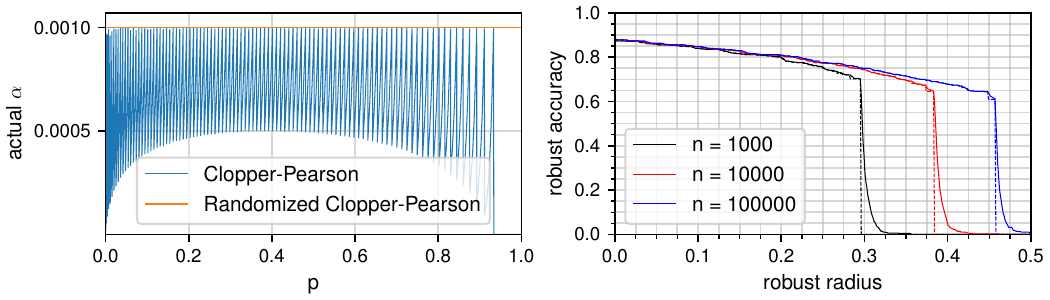}
\caption{\textbf{left}: Comparison of coverages of confidence intervals for the mean estimation of $\B(100, p)$ when $\alpha := 0.001$. Note that for $p > \sqrt[n]{\alpha} \sim 0.93$, the coverage is $1$. \textbf{right}: Comparison of $\ell_2$-robustness curves with the standard (dashed) or the randomized (solid) Clopper-Pearson bounds on a CIFAR-10 dataset under the standard setting. The experimental details are in~Appendix~\ref{exp:details}.\label{fig:rc_comparison}}
\end{figure}
The proof is in the Appendix~\ref{proof_propo_exact} and we remark that the interval can be efficiently found by binary search.
Proposition~\ref{propo:exact_coverage} implies that the randomized Clopper-Pearson bounds are optimal and all the other confidence intervals for binomial random variables are more conservative. It remains to demonstrate the advantage of the randomized confidence intervals. We refer to Figure~\ref{fig:rc_comparison} for the comparison of the randomized and deterministic Clopper-Pearson confidence intervals and how they affect the robustness. We note that the most significant difference is towards the high values of $p$ and for certification functions $r$ such that $\lim_{p \to 1} r(p) = \infty$, such as when smoothing with normal distribution. This explains the common sharp  drop by the end of the robustness curve.

\paragraph{Width of confidence intervals:} For the simplicity of exposition, let the width of a confidence interval at level $1-\alpha$ with 
 $n$ samples be $\asymp \sqrt{\log(1/\alpha)/n}$. This way, we hide the dependency on $p$
 into $\asymp$. In the full generality, the width of the confidence intervals exhibits many decay regimes between the rates $\sqrt{p(1-p)\log(1/\alpha)/n}$ 
 (when $np(1-p) \gg 1$) and $\log(1/\alpha)/n$ (when $np(1-p) \asymp 1$). Our algorithms capture the correct scaling of the confidence intervals. See~\cite{boucheron2013concentration} and the discussion on Bennett's inequality which captures the correct rates for Bernoulli mean estimation.

\subsection{Confidence Sequences}\label{sec:cs}

\paragraph{Limitations of confidence intervals:} In order to  compute the confidence intervals presented in the previous subsection, we need to collect samples and then run an estimation procedure once which brings certain limitations. Consider the following two scenarios: (1) It might be the case that we do not need all $100~000$ samples and after only $10$ it would be enough for our purposes because we could already conclude that the point cannot be certified here; thus, we wasted $99~990$ samples. (2) Alternatively, we could see that even $10^5$ samples are not enough, and we need to draw more samples. However, we have already spent our  failure budget $\alpha$, so we cannot even carry another test at all.

This motivates the introduction of confidence sequences. They generalize confidence intervals in the way that they provide a confidence interval after every received sample such that we control the probability that the underlying parameter is contained in \emph{all} the confidence intervals \emph{simultaneously}.

\begin{definition}[Confidence sequence]
     Let $\{u_t,v_t\}_{t=1}^\infty$ be mappings from a sequence of observations to a real number. They form a confidence sequence $I_t(\mathbf{x}_{:t}) = [u_t(\mathbf{x}_{:t}), v_t(\mathbf{x}_{:t})]$ for all $t\geq 1$ with confidence level $1-\alpha$ if 
    \[
        \Pbb\left( p \in I_t(\mathbf{X}_{:t}), \forall t > 1 \right) \geq 1-\alpha
    \]
    for any $p \in [0,1]$, where  $\mathbf{X}$ is an infinite sequence of Bernoulli random variables $\mathcal{B}(p)$.
\end{definition}

\begin{remark}
    Since we want the estimated parameter to be contained in all the confidence intervals simultaneously, we will have by convention that $I_{t+1}(x_{:t+1}) \subseteq I_t(x_{:t})$.
\end{remark}

 To simplify presentation, we would consider the symmetric ones; i.e., those where we consider the two possible failures - when we overestimate or underestimate the mean - to be equal. However, they will be constructed from two one-sided bounds, so the generalization is straightforward and will not be discussed.

\paragraph{Related work: } 
The construction of confidence sequences based on union bound employs the doubling trick which is widely used in online learning to convert fixed-horizon algorithms to anytime algorithms. In this direction, we refer to~\cite{mnih2008empirical} as the direct predecessor of this work, where they used similar techniques but did not explicitly construct confidence sequences. In the confidence sequence literature, this technique is similar to stitching of~\citet{howard2020time}. The stitched confidence intervals of~\citet{howard2020time} are generally shorter by a small constant factor, but the analysis and generalization become complicated, contrasting with our approach.

The construction based on betting is in the spirit of~\cite{orabona2023tight} (the reference contains an excellent survey on the topic). The difference mainly lies in the fact that we are interested in Bernoulli random variables, which allows us to use specialized tools at places, as opposed to the referred work, which considers bounded random variables. As an analogy to confidence intervals, the previous work constructed Bernstein-type inequalities, while we constructed Clopper-Pearson-like bounds. In~\citet{Ryu_2024}, the authors improved over~\cite{orabona2023tight} in certain aspects in the case of $[0, 1]$-valued random variables. Notably, in Section 3, they considered the special case - $\{0,1\}$-valued random variables and their results are greatly overlapping those in Section~\ref{ssec:betting}.

\subsection{Union bound confidence sequence}

A natural way how to extend the confidence intervals to confidence sequences is to construct a confidence interval at every time step and use a union bound to control the total failure probability. In the following, we first show that a naive application of this approach is asymptotically suboptimal, and then we provide a way how to construct optimal confidence intervals in a certain strong sense.

\paragraph{Intuition on the width of confidence sequences:} For any random variable with finite variance,  the optimal width of the confidence interval for the mean parameter scales as $\sqrt{\log(1/\alpha)/t}$ with the increasing number of samples $t$ at confidence level $1-\alpha$~\cite{lugosi2019mean}.
On the other hand, it is well known that the width of the optimal confidence sequence scales as $\sqrt{(\log(1/\alpha) + \log \log t)/t}$ as $t$ increases due to the law of iterated logarithms~\cite{Ledoux1991}. A naive use of  union bound, computing a confidence interval using failure probability at time step $t$,  $\alpha_t = \frac{\alpha c}{t^\gamma}$ for some $c$ and $\gamma > 1$ such that $\sum_{i=1}^\infty \alpha_t = \alpha$ yields a confidence sequence whose width scales as $\sqrt{\log(1/\alpha_t)/t} \approx \sqrt{(\log(t) + \log(1/\alpha))/t}$. We cannot choose any monotonous $\alpha_t$ schedule decaying slower because even for $\gamma = 1$ we still keep the $\log$ factor while the sum $\sum_{i=1}^\infty \delta_t$  diverges.

Now consider non-monotonous schedules of $\alpha_t$, two key ideas follows. (1) In order to have the optimal rate $\log(1/\alpha_t) \approx \log(1/\alpha) + \log\log t$, we need $\alpha_t \asymp \alpha/\log t$. Clearly, if this holds for all $t$, then $\sum_{t=1}^\infty \alpha_t$ diverges. (2) A confidence interval at time $t$ is also a valid confidence interval for all $t'>t$. Furthermore, if $t'$ is not much larger than $t$, then it may still asymptotically have the optimal width up to a multiplicative constant. Thus, updating the confidence sequence when $t$ is a power of (say) $2$ result in the optimal width. This reasoning is formalized in the following theorem.

\begin{theorem}\label{thm:asymp}
    Fix $\alpha>0$. Consider a sequence 
         \[\alpha_t = \begin{cases}
        \frac{\alpha}{k(k+1)} & \text{if $t = 2^k$ for integer $k$,} \\
        0 & \, \text{otherwise.} 
        \end{cases}
        \]
         Then Algorithm~\ref{alg:union_bound} produces a confidence sequence at level $1-\alpha$  of the following width which is attained in the worst case
        \[
        \varepsilon_t \lesssim  \sqrt{\frac{\log(1/\alpha) + \log\log (t)}{t}}  
        \]   
        where $\varepsilon_t$ is $U - L$ at time $t$, and the confidence intervals are randomized Clopper-Pearson intervals.
\end{theorem}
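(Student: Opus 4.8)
The plan is to separate the two claims: that Algorithm~\ref{alg:union_bound} with this schedule is a valid confidence sequence at level $1-\alpha$, and that its width obeys, and in the worst case matches, the stated rate.

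For \emph{validity}, I would first note that $\{\alpha_t\}$ is supported on the powers of two and that the weights telescope, $\sum_{k\ge 1}\alpha_{2^k}=\alpha\sum_{k\ge 1}\bigl(\tfrac1k-\tfrac1{k+1}\bigr)=\alpha$. The algorithm recomputes a randomized Clopper--Pearson interval only at the times $t=2^k$, from the first $2^k$ observations and with failure budget $\alpha_{2^k}$, and for $2^k\le t<2^{k+1}$ it returns the running intersection of the intervals built so far. This is sound because a fixed-sample-size interval remains valid at every later time, and the intersection enforces the nesting convention $I_{t+1}\subseteq I_t$. Hence $p\notin I_t$ for some $t$ forces one of the intervals built at a power of two to fail, and a union bound over those events gives total failure probability at most $\sum_{k\ge1}\alpha_{2^k}=\alpha$.

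For the \emph{width}, I would fix $t\ge2$, set $k=\lfloor\log_2 t\rfloor$, and note that the interval active at time $t$ was produced from $n=2^k$ samples, with $t/2<2^k\le t$, at level $\alpha_{2^k}=\alpha/(k(k+1))$. Plugging into the upper bound on the width of randomized Clopper--Pearson intervals from Subsection~\ref{sec:ci} --- which, in the regime $0<c<p<C<1$ assumed throughout, is of Hoeffding order $\sqrt{\log(1/\alpha_{2^k})/2^k}$ --- and using $\log(1/\alpha_{2^k})=\log(1/\alpha)+\log(k(k+1))$ with $\log(k(k+1))\lesssim\log\log t$ (the small-$t$ range absorbed into the constant) and $2^k>t/2$, I obtain
\[
\varepsilon_t\ \lesssim\ \sqrt{\frac{\log(1/\alpha_{2^k})}{2^k}}\ \lesssim\ \sqrt{\frac{\log(1/\alpha)+\log\log t}{t}} .
\]
For the ``attained in the worst case'' part I would run the same chain in reverse, invoking the matching lower bound for the randomized Clopper--Pearson width (also Subsection~\ref{sec:ci}): on a worst-case realization of the first $2^k$ samples the active interval has width $\gtrsim\sqrt{\log(1/\alpha_{2^k})/2^k}$, and since $2^k\le t$ and $\log(k(k+1))\gtrsim\log\log t$ for $t$ large, this is $\gtrsim\sqrt{(\log(1/\alpha)+\log\log t)/t}$.

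The main obstacle is not really internal to this theorem but lies in its prerequisite: two-sided control of the randomized Clopper--Pearson interval width with clean, uniform dependence on $\alpha$ and $n$ (including the effect of the auxiliary randomization $W$), which is exactly the content of Subsection~\ref{sec:ci}. Granting that, the only delicate points here are bookkeeping: checking that the geometric (doubling) schedule loses merely a constant factor when $t$ is replaced by $2^k$, that the non-monotone weights telescope to exactly $\alpha$, and that carrying an interval over between consecutive powers of two is simultaneously compatible with the union-bound validity argument and with the nesting convention. These are also what explains why a polynomially decaying $\alpha_t$ cannot work --- it keeps a full $\log t$ rather than $\log\log t$ in the numerator.
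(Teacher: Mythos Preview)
Your proposal is correct and follows essentially the same route as the paper: telescoping $\sum_{k\ge1}\alpha/(k(k+1))=\alpha$ plus a union bound for validity, then a Hoeffding-order width bound at the last power of two combined with the observation that $2^k\asymp t$ for $2^k\le t<2^{k+1}$ to carry the rate to all $t$. The only cosmetic discrepancy is that the paper invokes Hoeffding's inequality directly (noting Clopper--Pearson is no wider) rather than citing a width bound from Subsection~\ref{sec:ci}, and it does not spell out the ``attained in the worst case'' direction that you sketch.
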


The proof is in Appendix~\ref{prof:thn_asymp}. We remark that the statement of Theorem~\ref{thm:asymp} prioritizes simplicity over its full generality. The generalization to other schedules of $\alpha_t$ from the second bullet point as described in the previous paragraph is routine and described in detail in Appendix~\ref{ssec:hyper}.

\begin{corollary}
    The asymptotic rate of~\ref{thm:asymp} is optimal due to law-of-iterated-logarithm~\citep{Ledoux1991} and even in the finite sample regime due to~\citet{balsubramani2014sharp}[Theorem 2].
\end{corollary}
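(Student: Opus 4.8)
I would split the proof into a coverage claim and a width claim, following the doubling idea sketched in the paragraph preceding the theorem. For coverage, observe that Algorithm~\ref{alg:union_bound} only modifies the running interval at the times $t=2^k$ (for integer $k\ge1$; $I_1=[0,1]$), where it intersects it with a fresh randomized Clopper--Pearson interval computed from $\mathbf{X}_{:2^k}$ at level $1-\alpha_{2^k}$; between consecutive powers of two it carries the interval over, consistent with the convention $I_{t+1}\subseteq I_t$. Hence $\{\,p\notin I_t\ \text{for some}\ t\ge1\,\}=\{\,p\notin I_{2^k}\ \text{for some}\ k\ge1\,\}$, and by Proposition~\ref{propo:exact_coverage} the $k$-th of these events has probability exactly $\alpha_{2^k}$. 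A union bound then gives total failure probability at most $\sum_{k\ge1}\tfrac{\alpha}{k(k+1)}=\alpha\sum_{k\ge1}\bigl(\tfrac1k-\tfrac1{k+1}\bigr)=\alpha$, as required.

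For the width I would combine the peeling with the one-step width estimate for randomized Clopper--Pearson intervals: with $n$ i.i.d.\ Bernoulli$(p)$ observations and level $1-\delta$ the (two-sided, symmetric) interval has width $\asymp\sqrt{\log(1/\delta)/n}$ under the running assumption $0<c<p<C<1$ (this is the inversion of a Chernoff bound and is the content of the width bounds the paper establishes). Fix $t$ and set $k=\lfloor\log_2 t\rfloor$, so $2^k\le t<2^{k+1}$, hence $2^k\ge t/2$ and $k\asymp\log t$. Because $2^k\le t$ is an update time, $I_t\subseteq I_{2^k}$, so
\[
\varepsilon_t \;\le\; \bigl|I_{2^k}\bigr| \;\asymp\; \sqrt{\frac{\log(1/\alpha_{2^k})}{2^k}} \;=\; \sqrt{\frac{\log(1/\alpha)+\log\bigl(k(k+1)\bigr)}{2^k}} \;\lesssim\; \sqrt{\frac{\log(1/\alpha)+\log\log t}{t}},
\]
using $\log\bigl(k(k+1)\bigr)\asymp\log k\asymp\log\log t$ and $2^k\ge t/2$ (for the handful of small $t$ where $\log\log t$ is nonpositive one absorbs a constant). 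Since $\varepsilon_t$ is constant on each block $2^k\le t<2^{k+1}$, this already shows the bound is of the right order throughout the block, not only at powers of two.

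It remains to check the rate is \emph{attained} in the worst case. I would take $p=\tfrac12$, the value maximizing the Clopper--Pearson width, and argue that with probability bounded away from $0$ the empirical means $\widehat p_{2^j}$ all stay within $O(2^{-j/2})$ of $\tfrac12$ for $j\le k$, so the wider early intervals $I_{2^j}$ ($j<k$) do not substantially cut into the last one; hence $\varepsilon_t=\bigl|\bigcap_{j\le k}I_{2^j}\bigr|\gtrsim\bigl|I_{2^k}\bigr|\asymp\sqrt{(\log(1/\alpha)+\log\log t)/t}$ on that event. Making this last point rigorous --- controlling the geometry of the across-scale randomized Clopper--Pearson intervals so the running intersection is not much smaller than its most recent member --- is the step I expect to be the main obstacle; the union bound is immediate and the one-step width estimate is supplied by the paper's width analysis. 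Finally, the corollary follows at once: the upper bound just obtained matches, up to an absolute constant, the iterated-logarithm lower bound of \citet{Ledoux1991} asymptotically and the finite-sample lower bound of \citet{balsubramani2014sharp}[Theorem~2], so no level-$(1-\alpha)$ confidence sequence can have asymptotically, or even non-asymptotically, smaller width.
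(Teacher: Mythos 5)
Your proposal is, for the most part, a proof of Theorem~\ref{thm:asymp} rather than of the corollary. The coverage argument (telescoping union bound over dyadic update times), the width bound via peeling, and the ``attained in the worst case'' discussion at $p=\tfrac12$ all belong to the theorem, whose proof the paper gives in Appendix~\ref{prof:thn_asymp}; in particular the step you flag as the main obstacle --- controlling the geometry of the running intersection so that it is not much narrower than its most recent member --- is needed only for the attainment clause of the theorem, not for the corollary. The corollary itself is a pure lower-bound claim: \emph{no} level-$(1-\alpha)$ confidence sequence can have width asymptotically smaller than $\sqrt{(\log(1/\alpha)+\log\log t)/t}$, so the rate of Theorem~\ref{thm:asymp} cannot be improved by any method. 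Your only treatment of this is the closing sentence, which defers entirely to the two citations --- and that is exactly what the paper does, so your conclusion is correct and coincides with the paper's (citation-only) justification. If you wanted the corollary to be self-contained, the one missing sentence is the reduction from the law of the iterated logarithm to a width lower bound: since the centered partial sums exceed $(1-\epsilon)\sqrt{2p(1-p)\,t\log\log t}$ infinitely often almost surely, a confidence sequence of width $o(\sqrt{\log\log t/t})$ would fail to contain $p$ at some time with probability one, contradicting coverage for any $\alpha<1$; \citet{balsubramani2014sharp}[Theorem~2] is the finite-sample analogue of this statement, and the $\sqrt{\log(1/\alpha)/t}$ term is already unavoidable for a single fixed-time confidence interval.
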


\subsection{Confidence sequences based on betting}\label{ssec:betting}
A recent alternative approach to confidence sequences is based on a hypothetical betting game. For the illustration, consider a fair sequential game; e.g., sequentially betting on outcomes of a coin. If we  guess the outcome correctly, we win the staked amount, otherwise we lose it. If the coin is fair, in expectation, our wealth stays the same. On the other hand, if the game is not fair and the coin is biased, we can win money. For example, if the true head-probability is $0.51$, we start increasing our wealth in an exponential fashion, see Example~\ref{ex:exponential}; thus, if we win lots of money, we can conclude that the game is not fair. We instantiate a betting game for every possible mean $0 \leq p \leq 1$ that would be fair if the true mean is $p$. Then we observe samples of the random variable and as soon as we win enough money, we drop that particular $p$ from the confidence sequence. To make things formal, we introduce the necessary concepts from probability theory. The evolution of our wealth throughout a fair game is modeled by martingales\footnote{It would be historically accurate to say that martingales actually model fair games.}, sequences of random variables for which, independently of the past, the expected value stays the same. 

\begin{definition}[Martingale]
    A sequence of random variables $W_1, W_2, \dots$ is called a \emph{martingale} if for any integer $n > 0$, we have $\Exp(\abs{W_n}) < \infty$ and $\Exp(W_{n+1}|W_1, \dots, W_n) = W_n$. If we instead have  $\Exp(W_{n+1}|W_1, \dots, W_n) \leq W_n$, then the sequence is called a  \emph{supermartingale}.
\end{definition}

\begin{minipage}{.485\linewidth}
    \begin{algorithm}[H]
\caption{Union-Bound Confidence Sequence}
\label{alg:union_bound}
    \begin{algorithmic}
   \STATE {\bfseries } 
   \STATE $t,H,K,L,U \gets 0,0,0,0,1$ 
   \LOOP
   \STATE Obtain random $x$
   \STATE $H \gets H + x $
   \STATE $t \gets t+1$
   \IF{$t = 2^K$}
   \STATE $K \gets K+1$
   \STATE $\alpha_t \gets \alpha/(K(K+1))$
   \STATE $L\gets \max\{L, \LCI(H, t, \alpha_t) \}$
   \STATE $U \gets \min\{U, \UCI(H,t, \alpha_t)\}$
   \ENDIF
  \ENDLOOP
  \vspace{1pt}
\end{algorithmic}
\end{algorithm}
\end{minipage}
\begin{minipage}{.515\linewidth}
\begin{algorithm}[H]
\caption{Betting Confidence Sequence}
\label{alg:sprt_conf_seq}
    \begin{algorithmic}
   \STATE {\bfseries } 
   \STATE $\textsc{logQ},t,H,L,U \gets 0,0,0,0,1$ 
   \LOOP
   \STATE $\hat q \gets (H+1/2)/(t+1)$
   \STATE Obtain random $x$
   \STATE $H \gets H + x $
   \STATE $t \gets t+1$
   \STATE $\textsc{logQ} \gets \textsc{logQ} + x\log(\hat q) + (1-x)\log(1-\hat q) $
   \STATE $\textsc{logP}(p) := H\log(p) + (t-H)\log(1-p)$
   \STATE $I_p \gets \{p | \textsc{logQ} - \textsc{logP}(p) \leq \log(1/\alpha)\}$
   \STATE $L\gets \max\{L, \min I_p\}$
   \STATE $U\gets \min\{U, \max I_p\}$
  \ENDLOOP
\end{algorithmic}
\end{algorithm}
\end{minipage}

In the coin-betting example, $W_1, W_2, \dots $ is a martingale where $W_n$ represents our wealth after playing the game for $n$ rounds. We stress  that $W_t \geq 0$ for all $t > 0$. By convention, we will also have $W_1 = 1$. We further need  a time-uniform generalization of Markov's inequality.

\begin{proposition}
[Ville's inequality~\cite{durrett2010probability}]\label{thm:ville}
    Let $W_1, W_2, \dots$ be a non-negative supermartingale. then for any real $a > 0$
    \[
    \Pr\left[\sup_{n \geq 1} W_n \geq a\right] \leq \frac{\Exp\left[W_1\right]}{a}.
    \]
\end{proposition}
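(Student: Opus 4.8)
The plan is to run the classical first-passage / stopped-supermartingale argument, with a small limiting step at the end to deal with the fact that the supremum is over the closed half-line $[a,\infty)$. Fix $a>0$. For each $b$ with $0<b<a$ define the stopping time
\[
\tau_b \;=\; \inf\{\,n\geq 1 : W_n \geq b\,\}, \qquad \inf\emptyset := +\infty,
\]
which is a stopping time for the natural filtration $\mathcal{F}_n = \sigma(W_1,\dots,W_n)$. Since $W_1,W_2,\dots$ is a supermartingale and $\tau_b\wedge n$ is a bounded stopping time, the stopped process $(W_{\tau_b\wedge n})_{n\geq 1}$ is again a supermartingale; hence $\Exp[W_{\tau_b\wedge n}]\leq \Exp[W_{\tau_b\wedge 1}] = \Exp[W_1]$ for every $n\geq 1$. (If one prefers not to invoke optional stopping as a black box, this is the one-line computation $\Exp[W_{\tau_b\wedge(n+1)}\mid\mathcal{F}_n] = W_n\Id\{\tau_b\leq n\} + \Id\{\tau_b>n\}\Exp[W_{n+1}\mid\mathcal{F}_n] \leq W_{\tau_b\wedge n}$, using $\{\tau_b>n\}\in\mathcal{F}_n$ and the supermartingale property, followed by an induction on $n$.)

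Next I would turn this into a Markov-type inequality at a finite horizon. On the event $\{\tau_b\leq n\}$ we have $W_{\tau_b\wedge n} = W_{\tau_b}\geq b$ by the definition of $\tau_b$, and on the complement $W_{\tau_b\wedge n} = W_n \geq 0$ by non-negativity of $W$; therefore $W_{\tau_b\wedge n}\geq b\,\Id\{\tau_b\leq n\}$ pointwise. Taking expectations gives $b\,\Pr[\tau_b\leq n]\leq \Exp[W_{\tau_b\wedge n}]\leq \Exp[W_1]$ for all $n$. The events $\{\tau_b\leq n\}$ increase to $\{\tau_b<\infty\}$, so by continuity of probability from below,
\[
\Pr[\tau_b<\infty] \;\leq\; \frac{\Exp[W_1]}{b}.
\]
Finally, for any $b<a$ one has $\{\sup_{n\geq 1}W_n\geq a\}\subseteq\{\sup_{n\geq 1}W_n>b\} = \{\exists\,n:\ W_n>b\}\subseteq\{\tau_b<\infty\}$, whence $\Pr[\sup_{n\geq 1}W_n\geq a]\leq \Exp[W_1]/b$; letting $b\uparrow a$ yields the stated bound $\Pr[\sup_{n\geq 1}W_n\geq a]\leq \Exp[W_1]/a$.

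\textbf{Expected main obstacle.} The only ingredient that is not completely mechanical is the step that a stopped supermartingale remains a supermartingale (equivalently, the validity of optional stopping at the bounded times $\tau_b\wedge n$); everything else is elementary set inclusions, non-negativity, and continuity of the measure. An alternative route that packages the same content is to apply Doob's maximal inequality for non-negative supermartingales directly, but the derivation of that inequality is essentially the argument above, so I would spell it out as sketched rather than cite it, keeping the proof self-contained. The $b\uparrow a$ limiting step is worth including explicitly because the naive identification of $\{\sup_n W_n\geq a\}$ with $\{\exists n:\ W_n\geq a\}$ can fail when the supremum is attained only in the limit.
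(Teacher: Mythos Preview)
Your proof is correct and is the standard optional-stopping argument for Ville's inequality. However, there is nothing to compare against: the paper does not prove this proposition. It is stated as a classical result (a time-uniform Markov inequality) and simply invoked in the proof of Theorem~\ref{thm:SPRT_conf_seq}; no derivation appears anywhere in the main text or the appendices. So your write-up supplies a self-contained justification that the paper omits, and the care you take with the $b\uparrow a$ limit (to handle the closed event $\{\sup_n W_n\geq a\}$ rather than the open one) is appropriate and not superfluous.
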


Thus, whenever we play a game and earn a lot, we can --- with high probability --- rule out the possibility that the game is fair. So far, this is still an abstract framework. We have yet to design the betting game and the betting strategy and describe how to run the infinite number of games.

\paragraph{Betting game:} Let\footnote{For  simplicity of exposition, similar arguments hold when $p \in \{0,1\}$.} $0 < p < 1$. Consider a coin-betting game where we win $1/p$ (resp. $1/(1-p)$)  multiple of the staked amount if we correctly predicted heads (resp. tails). If the underlying heads probability is $p$, then regardless of our bet - in expectation - we still have the same amount of money; thus, this game is fair. We identify heads and tails with outcomes $1,0$ respectively.

\paragraph{Betting strategy:} We deconstruct the betting strategy into the two sub-tasks: (1) If we know the underlying heads probability, we can design the optimal betting strategy for any criterion. (2) Estimate the heads probability. \textbf{First sub-task:} Let $p$ define the betting game from the previous paragraph and $q$ be the true heads probability; Optimally, bet $q$-fraction of wealth to heads and $1-q$ fraction to tails. It maximizes the expected log-wealth, or equivalently, the expected growth-rate of our wealth and is also known as the Kelly Criterion. This is known to be optimal for the adaptive estimation task, see~\cite{wald2004sequential} under the name sequential-probability-ratio-test (SPRT). One might have expected the optimal criterion to optimize to be the expected wealth; however, the betting strategy maximizing the expected wealth suggest to bet all the money on one of the outcomes. This strategy, however, leads to an eventual bankruptcy almost surely and is not recommended in this context. \textbf{Second sub-task:} A natural choice is to use the running sample mean of the observations as the estimator of $q$. Unfortunately, this estimator would be either $0$ or $1$ after the first observations, so we go bankrupt whenever the observed sequence contains both outcomes. Thus, we use a ``regularized" sample mean and after observing $H$ times heads in a sequence of length $t$, we estimate $\hat q = (H+0.5)/(t+1)$. This is the MAP estimate of the mean with Beta$(1/2, 1/2)$ prior and is known as Krichevsky–Trofimov estimator~(\cite{cesa2006prediction} Section 9.7),~\cite{kt} and is proven to be successful for building confidence sequences beyond the Bernoulli case~\cite{orabona2023tight}.

\paragraph{Parallel betting games:} We have described a betting game for a certain $p$ and a betting strategy. Employing Ville's inequality we can possibly reject the hypothesis that the true sampling distribution follows $p$. However, we need to run the game for all values of $p \in [0,1]$ and after every observation report the smallest interval containing all the values of $p$ that were not rejected so far. This is clearly impossible to do explicitly, but it turns out that the non-rejected values of $p$ form an interval and we can use binary search twice to find the end-points of the interval.  This is a non-trivial result and generally does not need to hold for confidence intervals constructed by betting. The first key observations is that the betting strategy does not depend on $p$, so we can ``play the game" just once. The second observation is that the resulting wealth is convex in $p$. To see why, let $\hat q_\tau$ (resp. $x_\tau$) be our estimate of $q$ (resp. the coin-toss outcome, for brevity $1/0$ corresponds to heads/tails respectively and $H = \sum_{\tau=1}^tx_\tau$), then our log-wealth at time $t$ can be written as a function of $p$.
\begin{align*}
\log W_t(p) &= \log
\prod_{\tau=1}^t \left( \left(\frac{\hat q_\tau}{p}\right)^{x_\tau}\left(\frac{1-\hat q_\tau}{1-p}\right)^{1-x_\tau}\right) \\&= \underbrace{\sum_{\tau=1}^t  x_\tau\log(\hat q_\tau) + (1-x_\tau)\log(1-\hat q_\tau)}_{\textsc{logQ}}  \underbrace{-H\log(p) - (t-H)\log(1-p){\vphantom{\sum_{\tau=1}}}}_{\textsc{logP}(p)}.
\end{align*}

Therefore, at every time-step, we can compute the interval of values of $p$ for which the betting game has not concluded yet and thus form the current confidence interval. The proof is in Appendix~\ref{proof_sprt_thm}.

\begin{theorem}\label{thm:SPRT_conf_seq}
    Algorithm~\ref{alg:sprt_conf_seq} produces a valid confidence sequence at confidence level $1-\alpha$, where we interpret the interval $[L,U]$ at iteration $T$ of the algorithm as the confidence interval at time $t$ with width  $\varepsilon$  at most (which is attained in the worst case):
    \[
    \varepsilon_t \lesssim \sqrt{\frac{\log(1/\alpha) + \log(t)}{t}}.
    \]
\end{theorem}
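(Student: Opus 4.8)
The plan is to split the statement into two parts: (i) \emph{validity} of the confidence sequence, and (ii) the \emph{width bound}. For validity, I would fix an arbitrary true mean $p\in(0,1)$ and show that the wealth process $W_T(p)$ is a non-negative supermartingale (in fact a martingale) under $\mathcal{B}(p)$. Concretely, since the bet at time $t$ places fraction $\hat q_t$ of wealth on heads (payoff $1/p$) and $1-\hat q_t$ on tails (payoff $1/(1-p)$), and $\hat q_t = (H_{t-1}+1/2)/(t-1+1)$ is measurable with respect to $x_1,\dots,x_{t-1}$, we get $\Exp[W_t(p)\mid x_{1:t-1}] = W_{t-1}(p)\bigl(p\cdot\frac{\hat q_t}{p} + (1-p)\cdot\frac{1-\hat q_t}{1-p}\bigr) = W_{t-1}(p)$, with $W_1(p)=1$. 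Ville's inequality (Proposition~\ref{thm:ville}) then gives $\Pbb(\sup_{T\ge 1} W_T(p) \ge 1/\alpha) \le \alpha$, i.e.\ with probability $\ge 1-\alpha$ we have $\log W_T(p) \le \log(1/\alpha)$ for all $T$ simultaneously, which is exactly the event $\{p \in I_p \text{ at every } T\}$ since $I_p = \{p : \textsc{logQ} - \textsc{logP}(p) \le \log(1/\alpha)\} = \{p : \log W_T(p) \le \log(1/\alpha)\}$. Because the reported $[L,U]$ is the running intersection of these intervals (via the $\max$/$\min$ updates), and because $\log W_T(p)$ is convex in $p$ (as decomposed in the excerpt, $\textsc{logP}(p) = -H\log p - (T-H)\log(1-p)$ is convex), the non-rejected set $I_p$ is genuinely an interval, so $L \le p \le U$ holds for all $T$ on this high-probability event. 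Taking complements over the (single) bad event yields coverage $1-\alpha$.

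For the width bound, I would lower-bound $\log W_T(p)$ uniformly over $p$ in a neighborhood of the true mean and invert. The standard route is the KT regret bound: for the Krichevsky--Trofimov estimator, $\textsc{logQ} = \sum_t x_t\log\hat q_t + (1-x_t)\log(1-\hat q_t) \ge \max_{q\in[0,1]}\bigl(H\log q + (T-H)\log(1-q)\bigr) - \tfrac12\log T - C$ for a universal constant $C$ (this is the classical $\tfrac12\log T + O(1)$ redundancy bound for the KT mixture/MAP predictor, \cite{cesa2006prediction}). Hence for any $p$, $\log W_T(p) = \textsc{logQ} - \textsc{logP}(p) \ge \bigl(\textsc{logP}(\hat p) - \textsc{logP}(p)\bigr) - \tfrac12\log T - C$ where $\hat p = H/T$ is the empirical mean. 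The bracketed term is $T$ times the binary KL divergence $\mathrm{kl}(\hat p \,\|\, p)$. So $p$ is excluded from $I_p$ at time $T$ once $T\,\mathrm{kl}(\hat p\,\|\,p) \gtrsim \log(1/\alpha) + \log T$. Under the standing assumption $c < p < C$ (so $\hat p$ is bounded away from $0,1$ with high probability and $\mathrm{kl}(\hat p\,\|\,p) \asymp (\hat p - p)^2$ by a Pinsker-type two-sided bound), this reads $|\hat p - p| \lesssim \sqrt{(\log(1/\alpha)+\log T)/T}$, which after re-centering (the true $p$ versus the endpoints $L,U$) gives $\varepsilon_T \lesssim \sqrt{(\log(1/\alpha)+\log t)/t}$. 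I would also note the worst-case attainment: near the boundary of this inequality, i.e.\ for $p$ at distance $\asymp\sqrt{(\log(1/\alpha)+\log T)/T}$ from $\hat p$, the value $\log W_T(p)$ is exactly of order $\log(1/\alpha)$, so the interval does not shrink faster than this rate, matching the upper bound up to constants.

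The main obstacle is the width direction, specifically making the KT redundancy bound and the KL-to-squared-distance conversion fully rigorous and \emph{uniform} in $p$ over the relevant window while respecting the ``running intersection'' structure of $[L,U]$. Two subtleties arise: first, $\varepsilon_T$ is $U-L$ where $U,L$ are running min/max over all past times, not just the time-$T$ interval $I_p$, so I need the per-time intervals to be (asymptotically) nested up to constants — this is where the monotone-in-$t$ behavior of $\sqrt{(\log(1/\alpha)+\log t)/t}$ and the remark $I_{t+1}\subseteq I_t$ get used, mirroring idea (2) in the union-bound discussion. Second, the KL $\asymp$ squared-distance equivalence degrades near $0$ and $1$, which is exactly why the standing assumption $0 < c < p < C < 1$ is invoked to absorb the $p$-dependence into $\asymp$; I would make this explicit rather than sweep it under the rug. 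The supermartingale/validity half, by contrast, is essentially a one-line computation plus Ville, plus the convexity remark already supplied in the excerpt, so I expect it to be routine.
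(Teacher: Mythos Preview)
Your proposal is correct and follows essentially the same approach as the paper: validity via the martingale property of $W_T(p)$ plus Ville's inequality and the convexity of $\log W_T(p)$ in $p$; width via the Krichevsky--Trofimov redundancy bound (the paper in fact just cites \cite{cesa2006prediction} and \cite{orabona2019modern} for this step, whereas you spell out the KT\,$\to$\,KL\,$\to$\,squared-distance chain). Your worry about the running intersection is unnecessary for the \emph{upper} bound on $\varepsilon_T$: since the reported $[L,U]$ at time $T$ is contained in the time-$T$ set $I_p$, any width bound on $I_p$ immediately bounds $U-L$.
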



This is not asymptotically optimal; still, empirically it performs well, and the same techniques can be used to obtain a confidence sequence that follows law-of-iterated-logarithms~\cite{orabona2023tight}. We present a comparison of the confidence sequences in Figure~\ref{fig:comp_cs} and conclude this subsection by an implementation remark.

\begin{remark}
Wealth $W(\mathbf{x}_{:t})$ does not depend on the order of $\textbf{x}_{:t}$, so we write it as $W(h, t)$, meaning wealth after observing $h$ heads in the first $t$ tosses. Now, for every  time $t$, we can compute what is the minimal number of observed $1$ (heads) (call it $H(t)$) so that $p$ is outside  of the lower-confidence interval. $H(t)$ is clearly non-decreasing in $t$; also, $W(h, t))$ can be easily computed  from $W(h-1, t)$ and from $W(h, t-1)$ in constant time. The whole dynamic programming approach can be summarized in the following scheme which is repeatedly executed starting from $h = 0, t = 0$.
\begin{itemize}
    \item If $W(h,t) \geq \frac{1}{\alpha}$: $H(t) := h$, $t:= t+1$, compute $W(h,t+1)$
    \item Else: $h := h+1$, compute $W(h+1, t)$. 
\end{itemize}
Both lines are executed in constant time. Also, we start from $W(0,0)$ and $h, t < N+1$. Executing a line, either $h$ or $t$ increases and thus to compute the thresholds for sequence of length up to $N$, we need to execute at most $2N$ lines of the scheme. We note that for the simplicity, we did not handle the case when $W(h,t) < 1/\alpha$. In that case we would just set $H(t) = t+1$ and increase $t$.

\end{remark}

\begin{figure}[t]
\centering
\includegraphics[width=\linewidth]{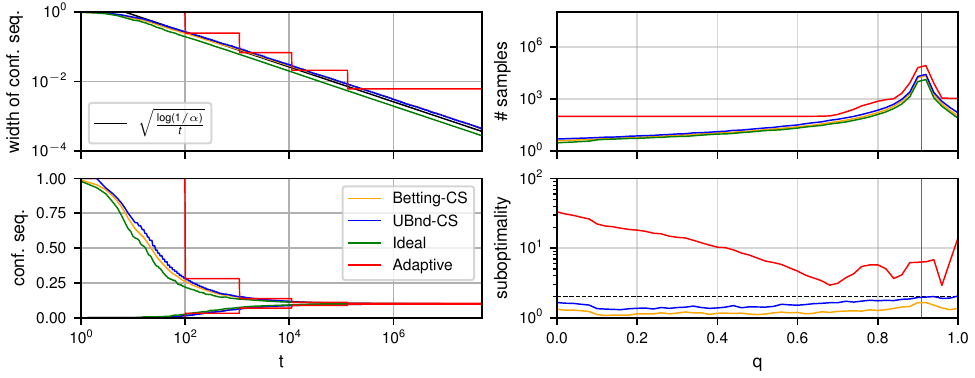}
\caption{\textbf{left:} Comparison of widths of confidence sequences for {the mean of Bernoulli} $\mathcal{B}(0.1)$ with $\alpha=0.001$. The width is on top and the actual confidence sequence on the bottom. {In the notation of Algorithms~\ref{alg:union_bound} and~\ref{alg:sprt_conf_seq}, the sequence of $U-L$ is in the top figure, while both sequences $U$ and $L$ are in the bottom figure}. Note the $\log$-scale for $t$ (and {width} on top). $\textbf{right}:$ Instantiation of Task~\ref{def:seq_dec_task}. The goal is to decide if $p = 0.91$ (vertical magenta line) or not with $\alpha = 0.001$. On top are the numbers of samples requested for the individual methods averaged over $1000$ trials for $51$ equally spaced values of $p\in[0,1]$; on the bottom is the relative suboptimality of the individual methods; i.e., how many times more samples did they request compared to the ideal method. Note log scales on the $y-$axis. \textbf{methods:} UBnd-CS and Betting-CS are from Algorithm~\ref{alg:union_bound} and~\ref{alg:sprt_conf_seq} respectively.  Adaptive is from~\cite{horvath2021boosting}. The ideal is the unattainable lower-bound for the two tasks. On the LHS, it is a confidence interval on level $1-\alpha$ computed independently at every time step. On the RHS, it is SPRT knowing both $p,q$ which is optimal due to~\cite{wald2004sequential}.
}
\label{fig:comp_cs}
\end{figure}

\section{Experiments}\label{sec:exps}

Now we shall provide experimental evidence for the performance of the proposed methods. 
We benchmark the confidence sequences on the  Sequential decision making task, where we try to certify a certain radius at given confidence level with as few samples as possible; the definition that follows is general beyond randomized smoothing. We emphasize that this setting of certifying a certain radius is by far the most common one in the robustness literature. We stress that the comparison of the robustness curves (e.g., as in Figure~\ref{fig:rc_comparison}) is vacuous, since in the adaptive task, we do not  spend samples  to \emph{improve} the robustness curves, beyond the certified level.

\begin{definition}[Sequential decision making task]\label{def:seq_dec_task}
Let $\frac12 \leq p,q \leq 1$ and only $p$ being known. Receive samples  from $\mathcal{B}(q)$. After every sample, either halt and declare that $p > q$, $p < q$, or request another sample. The task is to minimize the number of samples while being wrong with frequency at-most $\alpha$.
\end{definition}

\subsection{Related work}

We identified~\citet{horvath2021boosting} as the most relevant work. They  distinguish between samples for which a predetermined radius $r$ can be certified, and the samples for which it cannot. They  use $s$ values $n_1 < \dots < n_s$ ($[10^2, 10^3, 10^4, 1.2\cdot10^5]$) sequentially as the number of samples. They try to certify radius $r$ with $n_1$ samples; if it fails, then they try $n_2$ samples etc. They employ Bonferroni correction (union bound) and every sub-certification is allowed to fail with probability only $\frac{\alpha}{s}$. The key differences (details in Appendix~\ref{comp:horv}) to our method are that (1) It always abstains for hard tasks. (2) Splitting the $\alpha$ budget evenly degrades performance for small $n$. (3) method is only a heuristics. See Figure~\ref{fig:ablation} and Tables~\ref{tab:exp:det}, \ref{tab:exp:det_2}, \ref{tab:exp:det_3}. For the empirical comparison.

\par Another relevant work is~\citet{chen2022input}. 
Here, the certification is split in two phases. (1) Mean is crudely estimated. (2) The crude estimate selects the number of samples drawn so that the decrease (either multiplicative or absolute) in the certified radius is heuristically approximately at most a predetermined constant. We note that this heuristic for distributing samples can be made rigorous in a certain sense (see~\citet{dagum1995optimal}). This is trivial for confidence sequences, as one can stop the estimation only as soon as they short enough and solve the task of~\citet{chen2022input} with guarantees (instead of just heuristic). In this sense, we see our methods to be more general. We benchmark this in Table~\ref{tab:exp2}.

\par The works~\citet{seferis2023randomized, ugare2024incremental} also address the speed issues of randomized smoothing, however, they are orthogonal to our directions. In particular,~\citet{ugare2024incremental} uses  an auxiliary network for which the certification is faster and transfer the certificates to the original model.~\citet{seferis2023randomized} observes that few samples are sufficient for non-trivial certificates.




\begin{table}[]
\begin{tabular}{l|c c c   }
& r=0.5 &r=1.25 ,&r=2 \\
\hline 
Adaptive \cite{horvath2021boosting} &
$1976 \pm 41$ &
$3593\pm574$ &
$4623 \pm47$ \\ 
Betting CS~\ref{alg:sprt_conf_seq} &
$531 \pm 157$ &
$2169 \pm 257 $& 
$2130 \pm 339$ \\ 
Union bound CS~\ref{alg:union_bound} &
$635 \pm 157$ & 
$2557 \pm 234$ &
$2670 \pm 271$ \\
\hline 
Adaptive \cite{horvath2021boosting} &
$0.13 \pm 0.006$s &
$0.23 \pm 0.036$s &
$0.3 \pm 0.003$s \\ 
Betting CS~\ref{alg:sprt_conf_seq} &
$0.05 \pm 0.012$s &
$0.17 \pm 0.019$s& 
$0.17 \pm 0.02$s \\ 
Union bound CS~\ref{alg:union_bound} &
$0.05 \pm 0.006$s & 
$0.19 \pm 0.018$s &
$0.21 \pm 0.02$s
\end{tabular}
\caption{Comparison of the average number of samples (resp. time) needed to decide if a point is certifiably robust with given radius. Cifar10, $\ell_2$, details are in Appendix~\ref{comp:horv}}\label{tab:exp:det}
\end{table}

\begin{table}[!h]
\begin{center}
\begin{tabular}{||c c c c||} 
 \hline
  $\varepsilon$ & $0.01$ & $0.02$ & $0.03$ \\ [0.5ex] 
 \hline\hline
 UB-CS & 197~628 & 49~198 & 21~513\\ 
 \hline
 Betting-CS & 199~771 & 47~215 & 20~918 \\
 \hline
 Horvath & 768~560 & 94~900 & 81~080 \\
 \hline
\end{tabular}
\caption{We run the confidence sequences until the width is smaller than $\varepsilon$ on a (both sided) confidence level $0.999$. That way we can certify certain radius knowing that the true probability is at most $\varepsilon$ larger. We used the same network as for the $\ell_2$ experiment in Table~\ref{tab:exp:det} (WideResnet-40 on CIFAR10, $\sigma = 1$). We report the average number of samples required over $500$ images.}\label{tab:exp2}
\end{center}
\end{table}

\section{Conclusion}

In this paper, we investigated the statistical estimation procedures related to randomized smoothing and improved them in the following two ways: (1) We have provided a strictly stronger version of confidence intervals than the Clopper-Pearson confidence interval. (2) We have developed confidence sequences for sequential estimation in the framework of randomized smoothing, which will greatly reduce the number of samples needed for adaptive estimation tasks. Additionally, we provided matching algorithmic upper bounds with problem lower bounds for the relevant statistical estimation task.



\section{Broader Impact Statement}
We hope that this paper enlarges the interest in statistical estimation within the ML community.

\section*{Acknowledgments} The author was supported by the DFG Cluster of Excellence “Machine Learning – New Perspectives for Science”, EXC 2064/1, project number 390727645 and is thankful for the support of Open Philanthropy.

\bibliography{example_paper}
\bibliographystyle{icml2024}


\newpage 
\appendix

\section{Deferred examples}

\begin{example}\label{example:beta}[implicit type-2 error is exponentially small]
    Let us sample from $X \sim \B(p)$ to decide if the mean is smaller or larger than $p-\varepsilon$ using $n=100~000$ samples. We use Hoeffdings' inequality to bound the probability that $p$ is incorrectly estimated to be lower than $p - \varepsilon$, i.e., 
    \[
    \Pbb\left[\frac{1}{n}\sum_{i=1}^n X_i \leq \Exp[X] -\varepsilon \right] \leq e^{-2n\varepsilon^2}.
    \]
    Considering $\varepsilon$ to be a constant, we see that this probability scales as $e^{-n}$. For example, when the true probability is 0.5 and we want to decide if it is smaller or larger than $0.4$, already $1000$ samples make the probability of incorrectly decision to be roughly $2\cdot10^{-9}$.
    
\end{example}

\begin{example}\label{example:coverages}[suboptimality of Clopper-Pearson confidence interval and optimality of the randomized one]
    Recall that the coverage for $p$ is the probability that $p$ is included in the confidence interval when it is the true parameter and $\alpha$ is the allowed type-1 error and $1-\alpha$ should be the coverage. Consider samples from $X \sim \B(2,p)$ and $\alpha = 0.05$. By definition, the Clopper-Pearson upper intervals are $[0,1]$, $[0.025, 1]$, $[0.224, 1]$ for observations $x = 0, 1, 2$ respectively. Coverage for $p=0.224$ is $0.95$ because the event that $p$ is outside of the confidence interval is $\Pbb(X \in \{0,1\}) = 1-p^2 \approx 0.95$. On the other hand, coverage for $p=0.1$ is $\Pbb(X \in \{0,1\}) = 1-p^2 = 0.99$ and for all $p > 0.224$ it is $1$; see Figure~\ref{fig:example_appendix} for the coverages. Now we turn on to the randomized Clopper-Pearson interval for $p = 0.5$. Recall the definition of the upper interval,
    \[
    u'_r(x, w) = \inf\{p\,|\, \Pbb(\B(n,p) > x)  + w\Pbb(\B(n,p) = x) > \alpha \}.
    \]
    The randomized confidence interval for some value $x$ interpolates between the confidence intervals for $x$ and $x+1$ when $x<n$. Thus, when $x \neq 2$, $p>0.224$ is always in the confidence interval (this happens with probability $1-p^2 = 0.75$. Otherwise, we solve the following for $w$ (because we set $n=2$, $x=2$):
    \begin{align*}
    \Pbb(\B(2,p) > 2)  + w\Pbb(\B(2,p) = 2) = \alpha, \\
    0 + p^2w = \alpha, \\ 
    v = \alpha/p^2.
    \end{align*}
    Thus,  $p > 0.224$ is not contained in the randomized confidence interval iff $W \leq \alpha/p^2$ and $X = 2$. Since these random variables are independent, the resulting probability is $\Pbb(W \leq 0.2)\Pbb(X = 2) = \alpha/p^2 \cdot p^2 = \alpha$, as desired.
\end{example}\label{fig:example_appendix}
\begin{figure}[t]
\includegraphics[width=\linewidth]{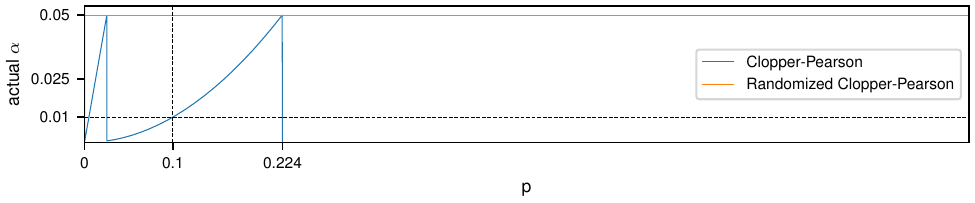}
\caption{Actual coverages for (randomized) Clopper-Pearson confidence intervals for $\B(2,p)$.}
\centering
\end{figure}

\begin{example}[Exponential increase of wealth for a biased coin]\label{ex:exponential}
    For simplicity of exposition we assume that the coin falls on head $51$ times from $100$ tosses. To get a high probability statement is straightforward.
    Let us always bet $0.51$ fraction of our money to to heads and $0.49$ to tails (equivalently, just bet $0.02$ of the money to the heads). If we win, we win $2\%$ of our money, otherwise we lose $2\%$. Then, our wealth after $100$ tosses will be $1.02^{51}\cdot 1.02^{-49} \sim 1.04$. Thus, every $100$ tosses we multiply our wealth by the factor of $1.04$ which is the desired exponential function.
\end{example}

\section{Binary or multiclass certification}\label{sec:bin_mult}

Although all the standard benchmarking datasets for randomized smoothing are multiclass (cifar10 and imagenet), the commonly used randomized smoothing certification protocol is for the binary setting, where we certify class $A$ against all the other classes merged in a super class. In that case, the certification is done using the formula $r(\hat p_A, 1-\hat p_A)$, where we have to guarantee that $p_A \geq \hat p_A$ at confidence level at least $1-\alpha$. The alternative is to use multiclass certification; here, the certification is done via formula~$r(\hat p_A, \hat p_B)$ ensuring that $p_A \geq \hat p_A$ and $p_B \leq \hat p_B$ at the same time at  confidence level at least $1-\alpha$. The difference between these two bounds naturally manifests in the regime when $p_A$ is small. Strikingly, when $p_A < 0.5$, the binary certification approach cannot certify any robustness, while the multiclass one possible can. The cost for the multiclass procedure is only that we have to divide the failure budget between the the two estimation procedures. This is usually insignificant. In the $\ell_2$ (and thus $\ell_\infty$ case), the role of $\alpha$ is rather minor, see~\cite{cohen2019certified} Figure 8. This is even more pronounced  in the $\ell_1$ case. Here $\alpha$ plays an absolutely negligible role in the resulting certified radius, see~\cite{voracek2023improving}, Subsection 2.7 for the discussion and Figures 4, 6. While this might be known to many, we believe that some readers may benefit from reading this argument. We demonstrate the difference in certification power in Figure~\ref{fig:cert_bin_mult}. This multiclass certification fits in our setting effortlessly. We can run one confidence sequence for $p_A$, and another for $p_B$. We do not even need to know $p_B$ and we can run it for all of them at the same time. This means, that we run it only for the second most observed class. This second most observed class does not need to be the actual runner-up class, but since it was possibly observed more times than the actual runner-up class, it will also provide a wider confidence interval, so the  statistical estimation is still correct.

\begin{figure}[t]
\centering
\includegraphics{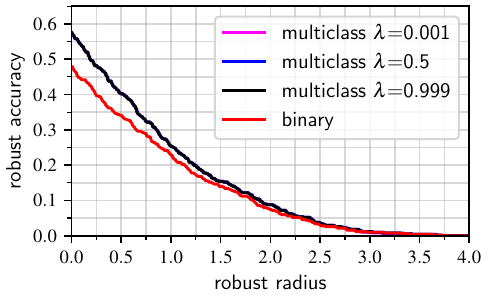}
\caption{Comparison of the robustness curves for binary and multiclass certification. In the binary case, all the failure budget $\alpha=0.001$ was spent on controlling the top-1 class probability. In the multiclass setting, we spend $\lambda$ fraction of the budget in bounding $p_A$ and the remaining $1-\lambda$ part on bounding $p_B$. Note that this has no significant effect. The average certified radius for binary certification is $0.50$, while for the multiclass it is $0.61$. The experimental details are in Appendix~\ref{exp:details}. The only difference is that now $\sigma=1$.}\label{fig:cert_bin_mult}
\end{figure}

\section{Experimental details}\label{exp:details}

\subsection{Figure~\ref{fig:rc_comparison}}
The model in Figure~\ref{fig:rc_comparison} is the pretrained cifar10 model (Exactly the same model/setting from the example in README) of~\cite{salman2019provable}, \url{https://github.com/Hadisalman/smoothing-adversarial}; in particular, it was ResNet-110 smoothed with Gaussian noise $\sigma=0.12$ for $\ell_2$ robustness. We set $\alpha = 0.001$ as usual and skip every $20$ images of the test dataset (using $500$ images, as is the standard practice).

\begin{figure}[t]
\centering
\includegraphics[width=\linewidth]{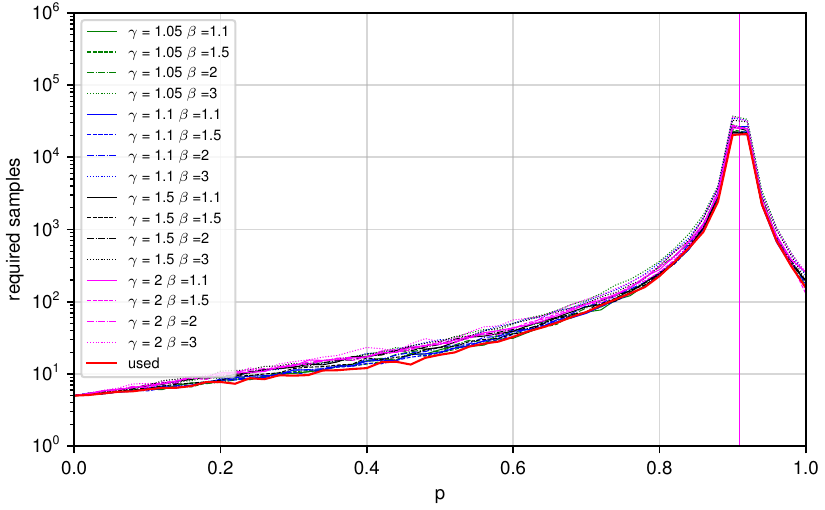}
\caption{Samples needed for the adaptive estimation task as in Figure~\ref{fig:comp_cs} for different hyperparameters. $\beta$ is the factor by which we enlarge the sample size before computing new confidence interval, $\gamma$ is the scaling of $\alpha$ as described in the main text. I.e., $k-$th estimation will have $\alpha_k  = \frac{\alpha c}{k^\gamma}$ where $c$ is the normalization constant such that $\sum_{k=1}^\infty {\alpha_k} = \alpha$.
}
\label{fig:ablation}
\end{figure}

\subsection{Parameters of union bound confidence sequences}\label{ssec:hyper}
We were enlarging the sample size by a factor of $\beta = 1.1$ between estimations (that is, the condition $T = 2^K$ is replaced by $T > \beta^K =  1.1^K$ and our schedule is $\alpha_k = 5\alpha/((t+4)(t+5))$ The method is not sensitive to the choice of hyperparameters, see~\ref{fig:ablation}. In fact, the hyperparameters $\beta, \gamma$ should be selected based on what is the "interesting" regime. There is an inherent tradeoff between a good asymptotical performance ($\beta, \gamma \sim 1$) and low-sample performance ($\beta, \gamma > 1$). This is confirmed in Figure~\ref{fig:ablation}. The width of the confidence interval scales as:
\[
\sqrt{\frac{\beta(\log(1/(\gamma -1)) + \log(1/\alpha) + \gamma \log \log_\beta t )}{t}}.
\]
This is because $\sum_{t=1}^\infty \frac{1}{t^\gamma} \asymp \frac{1}{\gamma-1}$, $\alpha_k \asymp \frac{\alpha (\gamma -1)}{k^\gamma}$ and $t \asymp \beta^k$, then $k \asymp \log_\beta t$. Plugging in these identites in $\sqrt{\frac{\log(\alpha_t)}{t}}$, and remembering that the confidence interval is recomputed only after elarging the sample size by $\beta$ factor, then for time $t$, the actual $t$ we use in the formula can be as small as $t/\beta$.

\subsubsection{Comparison with~\cite{horvath2021boosting}}\label{comp:horv}
The method from~\cite{horvath2021boosting} uses a finite collection of values of $n = n_1 < \dots < n_s$ for which the confidence intervals are computed. A direct consequence is that the hard examples for which more than $n_s$ samples are needed cannot be certified. Additionally, due to Bonferroni correction, with large $s$, the term $\log(s/\alpha)$ appearing in the confidence interval becomes large compared to $t$ for small values of $t$ (compare with the polynomial scaling that we propose where this is not the case). Consider $n_i = n_1^{i}$, then the width of the confidence interval scales as (not considering the regime when $t > n_s$ where the width is constant):
\[
\sqrt{\frac{n_1 (\log{s} + \log{1/\alpha})}{t}}.
\]

In particular, when we want to have confidence sequence up  $n_s = N$ samples,  then the width becomes
\[
\sqrt{\frac{\sqrt[s]{N} (\log{s} + \log{1/\alpha})}{t}}
\]

and we either pay for the fact that we have large differences between the steps ($\sqrt{s}{N}$), or for the fact that we have lot of steps ($\log(s)$) which is detrimental for small values of $t$.

\subsection{Details for~\ref{tab:exp:det}, \ref{tab:exp:det_2}, \ref{tab:exp:det_3}}
We had WideResNet-40-2 for CIFAR-10 trained for $120$ epochs with SGD and learning rate $0.1$, Nesterov momentum $0.9$, weight decay $0.0001$ and cosine annealing. batch size 64. The loss was the standard, noise-augmented training using the same noise as for the certification. We either used Gaussian smoothing for $\ell_2$ robustness of uniform in $\ell_\infty$ box for $\ell_1$ robustness.

For the certification we used batch size 100 (natural for~\citet{horvath2021boosting}). For our method, we had a mixed batch of data points so the data points for which we have the least amount of samples and are not decided yet are put in the batch.

\section{Proof of Proposition~\ref{propo:exact_coverage}}\label{proof_propo_exact}

\begin{proof}
    We show it for the upper interval, the lower is analogical. First, we note that $f(p) = \Pbb(\B(n,p) \geq a)$ is non-decreasing in $p$ for any $n, a$; Thus, $u_r(X) \leq p$ if and only if $\Pbb(\B(n,p) > x)  + w\Pbb(\B(n,p) = x) > \alpha$.
    Now, we show that $\Pbb(u_r(X) \leq p) = 1-\alpha$ for $X\sim\B(n,p)$. To shorten the notation, let $\alpha_1= \Pbb(X \geq a)$ and $\alpha_2 = \Pbb(X > a) $ for such $a$ that $\alpha_1 \leq \alpha \leq \alpha_2$. We have 
    \begin{align*}
        \Pbb(u_r(X) \leq p) &= \Pbb(u_r(X) \leq p \mid X < a)\Pbb(X < a) \\ &+ \Pbb(u_r(X) \leq p \mid X=a)\Pbb(X = a)\\ &+ \Pbb(u_r(X) \leq p \mid X > a)\Pbb(X > a),
    \end{align*}
    which we evaluate to $\Pbb(u_r(X) \leq p) = (1-\alpha_1) + (\alpha_1 - \alpha_2) \Pbb(u_r(X) \leq p \mid X=a)$ + 0. We note that given the event $X=a$, it holds $u_r(X) \leq p \iff \alpha_2 + W(\alpha_1-\alpha_2) > \alpha$, so $\Pbb(u_r(X) \leq p \mid X = a) = \Pbb(\alpha_2 + W(\alpha_1-\alpha_2) > \alpha) = \Pbb\left(W > \frac{\alpha - \alpha_2}{\alpha_1 - \alpha_2} \right) = \frac{\alpha_1 - \alpha}{\alpha_1-\alpha_2}$. Overall, $\Pbb(u_r(X) \leq p) = (1-\alpha_1) + (\alpha_1 - \alpha_2)\frac{\alpha_1 - \alpha}{\alpha_1-\alpha_2} = 1-\alpha$.

    The second part of the statement follows from Neymann-Pearson lemma. Concretely, we consider the following binary hypothesis testing problem from sample $\mathcal{B}(n, \theta)$, and we decide if $\theta = p$ or $\theta = q$. Both confidence intervals has size $\alpha$ and can be interpreted as binary tests -- just return the indicator function of $q \in I(x)$. Neymann-Pearson lemma states that the (unique) uniformly most powerful test is the likelihood ratio test, which is implemented by the randomized Clopper-Pearson interval.

\end{proof}

\section{Proof of Theorem~\ref{thm:asymp}}\label{prof:thn_asymp}
\begin{proof}[Proof of Theorem~\ref{thm:asymp}]
First, $\sum_{i=1}^\infty \alpha_t = \sum_{k=1}^\infty \frac{\alpha}{k(k+1)} = \alpha$; thus, by union bound, all the computed confidence intervals are simultaneously correct at confidence level $1-\alpha$. Next we show that the width is as claimed. When $t = 2^k$, we directly have from Hoeffding's inequality
    \[
    \varepsilon \lesssim \sqrt{\frac{\log \frac1{\alpha_t}}{t}}   \asymp \sqrt{\frac{\log\frac1\alpha + \log\log t}{t}}.
    \]
    Otherwise, we would use a confidence interval of some previous $t'$ such that $t' < t < 2t'$ with width 
    \[
    \varepsilon \lesssim   \sqrt{\frac{\log\frac1\alpha + \log\log t'}{t'}} \asymp \sqrt{\frac{\log\frac1\alpha + \log\log t}{t}},
    \]
    Noting that Clopper-Pearson's confidence intreval is shorter than Hoeffding's finishes the proof.
\end{proof}

\section{Proof of Theorem~\ref{thm:SPRT_conf_seq}}\label{proof_sprt_thm}

\begin{proof}
First we verify that everything in the algorithm is well defined and the logarithms take positive inputs. Now let $
    W_t = \exp\left(\frac{\textsc{logQ}_t}{\textsc{logP}_t}\right)$ where subscript $t$ denotes iteration of the algorithm. We show that it is a martingale when $X \sim \mathcal{B}(p)$ for $0 < p < 1$. In that case, 
    \[
    \Exp_X[W_{t}] = W_{t-1}\Exp_X\left[\left(\frac{\hat q_t}{p}\right)^X  \left(\frac{1-\hat q_t}{1-p}\right)^{1-X}  \right] = W_{t-1}\left( p\frac{\hat q}{p} + (1-p)\frac{1-\hat q}{1-p} \right) = W_{t-1}.
    \]

    If $p \in \{0,1\}$, then we would have a deterministic sequence and $W_t \leq W_{t-1}$. Thus, $W_t$ is a supermartingale. It is also output of the exponential function and so is non-negative. Therefore, the assumptions of Ville's inequality are satisfied and can be applied. Whenever $p$ is excluded from the confidence interval, it happened that $\textsc{logQ}_t - \textsc{logP}_t \geq \log(1/\alpha)$, or equivalently, $W_t \geq 1/\alpha$ which can only happen with probability $\alpha$ and it is thus a valid confidence sequence. We also recall that in the main text we have shown that $I_p$ is a sub-level set of a convex function and is thus convex and can be efficiently found by binary search.
    The width of the confidence interval follows from the standard regret bounds for the Krichevsky–Trofimov estimator~\cite{cesa2006prediction} Section 9.7;~\cite{kt}. The result then follows from~\cite{orabona2019modern}, Subsection 12.7. It was also derived in~\cite{Ryu_2024}; see~\cite{ryu2024gamblingbasedconfidencesequencesbounded} for generalization to vector-valued random variables.
    
\end{proof}

\begin{table}[]
\begin{tabular}{l|c c c   }
& r=0.5 &r=1.25 ,&r=2 \\
\hline 
Adaptive \cite{horvath2021boosting} &
$1976 \pm 41$ &
$3593\pm574$ &
$4623 \pm47$ \\ 
Betting CS~\ref{alg:sprt_conf_seq} &
$531 \pm 157$ &
$2169 \pm 257 $& 
$2130 \pm 339$ \\ 
Union bound CS~\ref{alg:union_bound} &
$635 \pm 157$ & 
$2557 \pm 234$ &
$2670 \pm 271$ \\
\hline 
Adaptive \cite{horvath2021boosting} &
$0.13 \pm 0.006$s &
$0.23 \pm 0.036$s &
$0.3 \pm 0.003$s \\ 
Betting CS~\ref{alg:sprt_conf_seq} &
$0.05 \pm 0.012$s &
$0.17 \pm 0.019$s& 
$0.17 \pm 0.02$s \\ 
Union bound CS~\ref{alg:union_bound} &
$0.05 \pm 0.006$s & 
$0.19 \pm 0.018$s &
$0.21 \pm 0.02$s
\end{tabular}
\begin{tabular}{l|c c c   }
& r=0.5 &r=1.25 ,&r=2 \\
\hline 
Adaptive \cite{horvath2021boosting} &
$4150 \pm 523$ &
$2266 \pm 640$ &
$4760 \pm 131$ \\ 
Betting CS~\ref{alg:sprt_conf_seq} &
$2206 \pm 150$ &
$1665 \pm 84 $& 
$3932 \pm 199$ \\ 
Union bound CS~\ref{alg:union_bound} &
$2665 \pm 78$ & 
$1674 \pm 37$ &
$3717 \pm 361$ \\
\hline 
Adaptive \cite{horvath2021boosting} &
$0.27 \pm 0.04$s &
$0.15 \pm 0.04$s &
$0.3 \pm 0.009$s \\ 
Betting CS~\ref{alg:sprt_conf_seq} &
$0.17 \pm 0.011$s &
$0.13 \pm 0.006$s& 
$0.3 \pm 0.014$s \\ 
Union bound CS~\ref{alg:union_bound} &
$0.2 \pm 0.005$s & 
$0.13 \pm 0.002$s &
$0.28 \pm 0.02$s
\end{tabular}
\caption{Comparison of the average number of sample needed to decide if the point is certifiably robust with given radius in the top. The time needed is on the bottom. 
The upper table was in the main paper, the bottom one is the exact same experiment but with a retrained model for $\ell_2$ robustness robustness with $\sigma=1$.
}\label{tab:exp:det_2}
\end{table}

\begin{table}[]
\begin{tabular}{l|c c c   }
& r=0.5 &r=1 ,&r=1.5 \\
\hline 
Adaptive \cite{horvath2021boosting} &
$423 \pm 38$ &
$3520 \pm 82$ &
$3823 \pm 127$ \\ 
Betting CS~\ref{alg:sprt_conf_seq} &
$138 \pm 28$ &
$2680 \pm 221 $& 
$3007 \pm 122$ \\ 
Union bound CS~\ref{alg:union_bound} &
$150 \pm 3$ & 
$2926 \pm 18$ &
$3144 \pm 347$ \\
\hline 
Adaptive \cite{horvath2021boosting} &
$0.04 \pm 0.006$s &
$0.23 \pm 0.006$s &
$0.25 \pm 0.009$s \\ 
Betting CS~\ref{alg:sprt_conf_seq} &
$0.19 \pm 0.002$s &
$0.21 \pm 0.017$s& 
$0.23 \pm 0.01$s \\ 
Union bound CS~\ref{alg:union_bound} &
$0.016 \pm 0.006$s & 
$0.22 \pm 0.009$s &
$0.25 \pm 0.03$s
\end{tabular}
\begin{tabular}{l|c c c   }
& r=0.5 &r=1 ,&r=1.5 \\
\hline 
Adaptive \cite{horvath2021boosting} &
$1370 \pm 596$ &
$4790 \pm 50$ &
$8463 \pm 714$ \\ 
Betting CS~\ref{alg:sprt_conf_seq} &
$592 \pm 94 $ &
$4055 \pm 459 $& 
$5822 \pm 81$ \\ 
Union bound CS~\ref{alg:union_bound} &
$806 \pm 113$ & 
$4327 \pm 106$ &
$5795 \pm 321$ \\
\hline 
Adaptive \cite{horvath2021boosting} &
$0.1 \pm 0.04$s &
$0.30 \pm 0.003$s &
$0.53 \pm 0.05$s \\ 
Betting CS~\ref{alg:sprt_conf_seq} &
$0.05 \pm 0.007$s &
$0.31 \pm 0.03$s& 
$0.44 \pm 0.005$s \\ 
Union bound CS~\ref{alg:union_bound} &
$0.06 \pm 0.008$s & 
$0.33 \pm 0.008$s &
$0.44 \pm 0.02$s
\end{tabular}
\caption{Comparison of the average number of samples needed to decide if the point is certifiably robust with given radius in the top. The time needed is on the bottom. 
top and bottom are again the exact same experiment but with a retrained model for $\ell_1$ robustness with $\sigma = 1$.
}\label{tab:exp:det_3}
\end{table}

\end{document}